\documentclass[]{ceurart}

%%
%% One can fix some overfulls
\sloppy
\usepackage{algorithm}
\usepackage{algpseudocode}
\usepackage{tabularx}
\usepackage{booktabs}
\usepackage{multirow}
\usepackage{todonotes}
\usepackage[scr=boondoxo]{mathalpha}
\usepackage{amsthm}
\usepackage{hyperref}
\newtheorem{definition}{Definition}
\newtheorem{proposition}{Proposition}
%% auto break lines
%\setminted{breaklines=true}
\newcommand{\email}[1]{\texttt{\rm\texttt{#1}}}

\begin{document}

%% Rights management information.
%% CC-BY is default license.
\copyrightyear{2024}
\copyrightclause{Copyright for this paper by its authors.
  Use permitted under Creative Commons License Attribution 4.0 International (CC BY 4.0).}

\conference{KBC-LM'24: Knowledge Base Construction from Pre-trained Language Models workshop at ISWC 2024}

\title{Enriching Ontologies with Disjointness Axioms using Large Language Models}

\author[1]{Elias Crum}\cormark[1]

\author[2]{Antonio {De Santis}}\cormark[1]

\author[3]{Manon Ovide}\cormark[1]

\author[4]{Jiaxin Pan}\cormark[1]

\author[5]{Alessia Pisu}\cormark[1]

\author[6]{Nicolas Lazzari}
\author[7]{Sebastian Rudolph}
%
%\authorrunning{Slytherin}
% First names are abbreviated in the running head.
% If there are more than two authors, 'et al.' is used.
%
%\institute{Princeton University, Princeton NJ 08544, USA \\
%\email{\{abc,lncs\}@uni-heidelberg.de} \and
%Politecnico di Milano, Italy \\
%\email{antonio.desantis@polimi.it}\and
%LIFAT, University of Tours, France\\
%\email{manon.ovide@univ-tours.fr} \and
%University of Stuttgart, Germany \\
%\email{\{jiaxin.pan\}@uni-stuttgart.de} \and
%University of Cagliari, Cagliari, IT \\
%\email{alessia.pisu96@unica.it} \and
%Princeton University, Princeton NJ 08544, USA \\
%\email{\{abc,lncs\}@uni-heidelberg.de} \and
%TU Dresden, Germany \\
%\email{sebastian.rudolph@tu-dresden.de}}
%

\address[1]{Ghent University, Belgium\quad \email{elias.crum@ugent.be}} 
\address[2]{Politecnico di Milano, Italy\quad \email{antonio.desantis@polimi.it}}
\address[3]{University of Tours, France\quad \email{manon.ovide@univ-tours.fr}}
\address[4]{University of Stuttgart, Germany\quad \email{jiaxin.pan@ki.uni-stuttgart.de}}
\address[5]{University of Cagliari, Italy\quad \email{alessia.pisu96@unica.it}}
\address[6]{University of Pisa and University of Bologna, Italy\quad \email{nicolas.lazzari3@unibo.it}}
\address[7]{TU Dresden, Germany\quad \email{sebastian.rudolph@tu-dresden.de}}

\cortext[1]{These authors contributed equally.}
\tnotetext[0]{This paper presents joint research that originated from the team project of ``House Slytherin'' at the 2024 International Semantic Web Summer School (ISWS) in Bertinoro, Italy.}

\begin{abstract}
Ontologies often lack explicit disjointness declarations between classes, despite their usefulness for sophisticated reasoning and consistency checking in Knowledge Graphs. 
In this study, we explore the potential of Large Language Models (LLMs) to enrich ontologies by identifying and asserting class disjointness axioms.
Our approach aims at leveraging the implicit knowledge embedded in LLMs, using prompt engineering to elicit this knowledge for classifying ontological disjointness. We validate our methodology on the DBpedia ontology, focusing on open-source LLMs. Our findings suggest that LLMs, when guided by effective prompt strategies, can reliably identify disjoint class relationships, thus streamlining the process of ontology completion without extensive manual input. For comprehensive disjointness enrichment, we propose a process that takes logical relationships between disjointness and subclass statements into account in order to maintain satisfiability and reduce the number of calls to the LLM. 
This work provides a foundation for future applications of LLMs in automated ontology enhancement and offers insights into optimizing LLM performance through strategic prompt design. Our code is publicly available on GitHub at \url{https://github.com/n28div/llm-disjointness}.
\end{abstract}

\begin{keywords}
  Large Language Models \sep 
  Disjointness Learning \sep 
  Ontology Enrichment
\end{keywords}
%%
%% This command processes the author and affiliation and title
%% information and builds the first part of the formatted document.
\maketitle

\section{Introduction}
\label{sec:intro}

It is generally understood that complementing the factual (assertional) knowledge represented in Knowledge Graphs with ontological (terminological) information greatly advances the usefulness of the ensuing knowledge base in terms of querying and many other downstream tasks. This is because combining assertional information with terminological background knowledge allows for the derivation of a vast amount of implicit knowledge, which is not explicitly stated in the knowledge base but follows logically from it and thus can be taken into account for all kinds of knowledge management activities, including query answering.

The by far most widespread type of ontological information added to knowledge graphs is \emph{taxonomic} in nature, that is, it is related to (i) putting the individual objects of interest into categories, usually referred to as \emph{classes}, based on shared characteristics and (ii) establishing set-theoretic relationships between these classes. Among the diverse possible such taxonomic relationships, the \emph{subclass/superclass} relationships -- tightly connected to the linguistic hyponymy/hypernymy relationships of the corresponding class names -- are the ones predominantly found across numerous ontologies today, typically forming sizeable conceptual hierarchies. As an example, the subclass/superclass relationship between the classes \texttt{Mammal} and \texttt{Vertebrate} implies that any object that belongs to (or, in more technical terms: is an \emph{instance} of) the class \texttt{Mammal} also must belong to the class \texttt{Vertebrate}.

Another well-known basic type of taxonomic relationship between two classes is that of \emph{disjointness}. Two classes are said to be \emph{disjoint} if it is impossible that they have common instances, which, intuitively, means that the two classes cannot overlap, and membership in these two classes is mutually exclusive. For example, disjointness of the classes 
\texttt{Mammal} and \texttt{Fish} implies that any instance of \texttt{Mammal} must not be an instance of \texttt{Fish}. Given the symmetric nature of disjointness, this is logically equivalent to saying that any instance of \texttt{Fish} must not be an instance of \texttt{Mammal}. As opposed to subclass statements, which allow for inferring positive facts from other positive facts, disjointness statements enable the inference of \emph{negated} facts. For example, given the fact that Flipper is an instance of \texttt{Mammal}, the above subclass relationship gives rise to the information that Flipper is an instance of \texttt{Vertebrate}, whereas the disjointness statement allows us to infer the information that Flipper must not be an instance of \texttt{Fish}. This fact makes disjointness information particularly valuable in the context of machine-learning approaches that rely on the presence of negative examples, such as Knowledge Graph Embedding.

When specifying taxonomic relationships between classes in the course of the ontology design process, it should be kept in mind that they are not meant to reflect spurious relationships in the data currently available, but rather they are supposed to represent immutable background knowledge that continues to hold in different situations or at different points in time. For instance, although historically, no woman has served as US President, a woman may be elected as the US President in the future. Therefore, the corresponding classes \texttt{Woman} and \texttt{USPresident} are not (ontologically) disjoint.\footnote{We might call them \textit{materially disjoint} due to the absence of material evidence demonstrating their non-disjointness.}
To reflect this situation more formally, one can employ the idea of \emph{possible} or \emph{conceivable worlds} (referred to as \emph{interpretations} in model-theoretic terms), which e.g., include potential future or just hypothetical circumstances. Then, a certain taxonomic relationship (such as subclass or disjointness) between two classes holds if the corresponding set relation (such as subset or intersection-emptiness) holds between the sets of class instances \emph{in every conceivable world} (\emph{under every conceivable interpretation}). 
Based on this, we will employ a very lightweight logical framework to give our arguments a formal underpinning: Stipulating a set $\mathscr{I}$ of conceivable worlds, we define taxonomic relationships for this set. The goal of ontological knowledge modeling is to capture $\mathscr{I}$ using a knowledge base $\mathcal{K}$ whose statements rule out the inconceivable worlds so that only the conceivable ones remain as models of $\mathcal{K}$.

\begin{definition}\label{def:taxo}
Fixing a vocabulary consisting of a set $\mathbf{C}$ of class names and a set $\mathbf{I}$ of individual names, an \emph{interpretation} $\mathcal{I} = (\Delta,\cdot^\mathcal{I})$ consists of a set $\Delta$ called the \emph{domain} and a function $\cdot^\mathcal{I}$ mapping every class name $\mathtt{C} \in \mathbf{C}$ to a subset $\mathtt{C}^\mathcal{I} \subseteq \Delta$ and every individual name $\mathtt{i} \in \mathbf{I}$ to an element $\mathtt{i}^\mathcal{I} \in \Delta$.

Let $\mathscr{I}$ be a set of interpretations, representing the \emph{conceivable worlds}.
Then, for an individual name $\mathtt{i} \in \mathbf{I}$ and for concept names $\mathtt{C}, \mathtt{D} \in \mathbf{C}$ we call
\begin{itemize}
\item 
$\mathtt{i}$ an \emph{instance of} $\mathtt{C}$ (written $\mathtt{i}:\mathtt{C}$) if every interpretation $\mathcal{I} \in \mathscr{I}$ satisfies $\mathtt{i}^\mathcal{I} \in \mathtt{C}^\mathcal{I}$,
\item 
$\mathtt{C}$ a \emph{subclass of} $\mathtt{D}$ (written $\mathtt{C} \sqsubseteq \mathtt{D}$) if every interpretation $\mathcal{I} \in \mathscr{I}$ satisfies $\mathtt{C}^\mathcal{I} \subseteq \mathtt{D}^\mathcal{I}$,
\item 
$\mathtt{C}$ \emph{disjoint with} $\mathtt{D}$ if every interpretation $\mathcal{I} \in \mathscr{I}$ satisfies $\mathtt{C}^\mathcal{I} \cap \mathtt{D}^\mathcal{I} = \emptyset$.
\item 
$\mathtt{C}$ \emph{incoherent} if every interpretation $\mathcal{I} \in \mathscr{I}$ satisfies $\mathtt{C}^\mathcal{I} = \emptyset$.
\end{itemize}

\end{definition}

%\begin{definition}[Class Disjointness, Disjointness in OWL]
%Two classes $\mathrm{C}$ and $\mathrm{D}$ are \emph{disjoint} iff $\mathtt{C}^{\Im} \cap D^{\Im}=\emptyset$, where $\mathtt{C}^{\Im}$ and $D^{\Im}$ are the semantic interpretations of the classes $\mathrm{C}$ and $\mathrm{D}$, respectively.
%Disjointness in the web ontology language OWL and OWL 2 is stated by the axiom 
%%\begin{center}
%  \texttt{\rm\tt DisjointClasses}$\left(C, D\right)$.   
%%\end{center}
%\end{definition}

%From an ontological perspective, two classes are considered ``disjoint" if and only if their taxonomic overlap is empty. For example, ``Cat'' and ``Car'' are disjoint classes because, semantically, an entity that is a cat cannot be a car and vice versa. Such a relationship is defined as \textit{ontological disjointness}. Notably, the lack of an observed occurrence of an entity belonging to two different classes may not classify them as disjoint. 

% Introducing disjointness axioms significantly aids in consistency checking and the automatic evaluation of entity class membership within a knowledge base. 
%For instance, given an Open World Assumption setting, if it is established that "Cat" and "Car" are disjoint classes, and an entity "Tom" is added as an instance of the class "Cat," it can be logically asserted that "Tom" is not also an instance of the class "Car." Without disjointness designations, a reasoner would not be able to make this assertion, thus, increasing logical reasoning complexity. 

\noindent As discussed above, ontologically dictated taxonomic relationships can be leveraged for sophisticated reasoning and consistency-checking tasks when reasoning over a knowledge graph. Yet, despite their usefulness, disjointness relationships are rarely explicitly recorded within an ontology. 
Research on 1,275 ontologies showed that only 97 of them include disjointness assertions \cite{wang2006gauging}. 
Arguably, this can be explained by the fact that disjointness information is so self-evident from a human common-sense point of view, that human experts are often not aware that it is not logically ``built-in'' but needs to be explicitly specified.  
%Two obstacles to adding disjointness assertions are the expected assumption of self-evident logical relationships between classes as well as the time-consuming task of manually labeling disjoint classes. 
For this reason, semi-automated labeling of disjoint classes could be advantageous. Recent approaches \cite{volker2007learning,volker2015automatic,rizzo2021unsupervised} propose supervised and unsupervised models using various features in disjointness axioms. However, the generalizability of these methods is limited to their specific datasets and cannot be implemented on a large scale. Additionally, the sophisticated feature engineering required hinders their practical application. Therefore, a method that functions independently of feature design and dataset restrictions is highly desirable.

Given that (i) ontological class descriptions are often recorded as (or associated with) terms in natural language and (ii) LLMs have been found to possess wide linguistic and semantic working knowledge, we aim to assess the potential of LLMs to decide on the question which classes ought to be disjoint while assessing the impact of prompt engineering on classification validity. We hypothesize that through the use of prompt engineering, LLMs are to classify ontologically disjoint classes with high validity in both positive (two classes are ontologically disjoint), and negative (two classes are not ontologically disjoint), cases. We test our hypothesis on the DBpedia ontology\footnote{https://DBpedia.org/ontology/} using LLMs. We propose a method that intertwines the LLM-based disjointness classification with basic logical inferencing to increase efficiency, maintain consistency, and minimize the number of calls to the LLM.

%\section{Research Questions}
%\label{sec:rq}

Thus, this paper is dedicated to answering the following main research questions:
\begin{itemize}
    \item[] RQ1: Can LLMs help enrich ontologies with class disjointness axioms?
    \item[] RQ2: Which LLM prompts work better for disjointness discovery?
    \item[] RQ3: How can we exploit taxonomic relationships to reduce interaction with the LLM?
\end{itemize}
% \subsection*{RQ1: Can Large Language Models help enrich ontologies with class disjointness axioms?}

% \subsection*{RQ2: Which LLM prompts work better for disjointness discovery?}

% \subsection*{RQ3: How can we take logical relationships into account to reduce the interactions with the LLM?}

\section{Related Work}
\label{sec:related}

\paragraph{Disjointness Learning}

Models for disjointness learning can be categorized into supervised and unsupervised approaches. In the unsupervised category, \citet{schlobach2005debugging} follows the \emph{strong disjointness assumption} \cite{cornet2002usability}, which posits that children of a common parent in the subsumption hierarchy should be considered disjoint. They introduced a pinpointing algorithm to identify minimal sets of axioms that need revision to make an ontology coherent, thereby enriching appropriate disjointness statements. However, this approach neglects background knowledge, which could be beneficial in identifying disjoint classes. \citet{rizzo2021unsupervised} proposes an unsupervised approach based on concept learning and inductive classification. This method employs a hierarchical conceptual clustering technique capable of providing intensional cluster descriptions and utilizes a novel form of semi-distances over individuals in an ontological knowledge base, incorporating available background knowledge.
%The method uses medoids as cluster prototypes to address limitations like the lack of algebraic structure in instance space representation. 
In the supervised category, \citet{volker2007learning,volker2015automatic} gather syntactic and semantic evidence, such as positive and negative association rules as well as correlation coefficients, from various sources to establish a strong foundation for learning disjointness.
%\cite{volker2015automatic} compares the performance of two fundamentally different state-of-the-art approaches: inductive, bottom-up discovery of disjointness axioms from given facts and a supervised approach to learning disjointness across datasets. 
However, their work exploits background knowledge and reasoning only to a limited extent. Subsequent work, the DL-Learner by \citet{lehmann2009dl}, uses Inductive Logic Programming (ILP) for learning class descriptions, including disjointness.
Despite these advancements, disjointness learning with LLMs remains much underexplored.

\iffalse
Because DL-Learner relies on the usage of a reasoning component, it is not directly applicable to datasets of a size typical for Linked Data. Hellmann et al. [5] tried to tackle this problem by considering subsets of the full amount of data to make using the basic approach feasible again.

Moreover, both relational learning methods and
techniques based on formal concept analysis have been

proposed to the purpose [27,28], but no specific as-
assessment of the induced axiom quality is made. This

limit has been pointed out also by Volker et al. [6],

where an approach based on association rule min-
ing has been introduced. Additional approaches rely on-
ing on association rules have been proposed by Fleis-
chacker et al. [7] and Volker et al. [5]. The focus

of these works was studying the correlation between
classes comparatively. Specifically, association rules,
negative association rules, and correlation coefficients
have been considered. Also in these cases, background
knowledge and reasoning were exploited in a limited

extent. Lehman et al. [27] proposed a tool for repairing various types of ontology modeling errors: it uses

supervised methods from the DL-LEARNER frame-
work [29] to enrich ontologies with axioms elicited

from existing instances.
\fi

\paragraph{Large Language Models}
In recent years, Large Language Models (LLMs) have become state-of-the-art for Natural Language Processing and have also significantly impacted other fields such as knowledge engineering \cite{allen,roadmap,integratinglargelanguagemodels,petroni-etal-2019-language}. LLMs rely on pre-training Transformer models \cite{vaswani2017attention} over large-scale unlabeled corpora. 
%As an early attempt, the pre-trained language model BERT \cite{kenton2019bert} utilizes the highly parallelizable Transformer architecture and self-attention mechanisms to pre-train a bidirectional language model on large-scale unlabeled corpora. 
Pre-trained context-aware word representations achieve state-of-the-art performance on various downstream tasks and set the ``pre-training and fine-tuning'' learning paradigm. 
%Following BERT, models such as GPT-2 \cite{radford4language} and BART \cite{lewis2020bart} introduced different architectures or improved pre-training strategies. 
Early LLMs, such as BERT \cite{kenton2019bert}, utilized relatively small training corpora and required fine-tuning for specific downstream tasks. However, subsequent research demonstrated that scaling up both model size and dataset volume significantly enhances performance. GPT-3 \cite{brown2020language}, for instance, achieves competitive results through few-shot learning and in-context learning without parameter updates. GPT-3.5 further improves capabilities by incorporating reinforcement learning from human feedback (RLHF).
%Later, ChatGPT \cite{open2022introducing} enabled users to interact through dialogue, demonstrating impressive conversational abilities. Nowadays, new large language models (LLMs) emerge and are updated almost daily. 
The introduction of GPT-4 \cite{openai2023gpt} marked a milestone by extending beyond text input to include multimodal signals. Meta AI introduced the collection of LLaMA models \cite{touvron2023llama,touvron2023llama2} with four different sizes. Other notable LLMs, such as Claude, Gemini \cite{reid2024gemini}, and Mixtral \cite{jiang2024mixtral}, have also garnered significant attention.

\paragraph{Prompt Engineering}
Designing effective prompts for LLMs is essential for maximizing their potential. %Prompt engineering, which involves strategically crafting task-specific instructions, has emerged as a crucial technique for enhancing the capability and reliability of LLMs. 
Key strategies in prompt engineering include zero-shot \cite{radford4language}, few-shot \cite{brown2020language}, and chain-of-thought \cite{chen2024unleashing} prompting. \emph{Zero-shot} \cite{radford4language} involves providing task descriptions to LLMs without any input-output examples, relying on the models’ pre-existing knowledge to generate responses. \emph{Few-shot} \cite{brown2020language} includes input-output examples, guiding the models’ generation process. \emph{Chain-of-Thought (CoT)} \cite{wei2022chainofthought} promotes coherent and step-by-step reasoning by decomposing a complex question into a series of simpler logical reasoning questions,  mimicking human problem-solving processes. This method has been shown to significantly improve performance on reasoning tasks \cite{wei2022chainofthought}.
However, the need for multiple prompts makes this approach difficult to use at large scales. With this in mind, \citet{kojima2023large} proposed \emph{Zero-shot-CoT} prompting. They found that by appending the phrase ``Let's think step by step.'' to the end of a question, LLMs can generate a chain of thought that leads to more accurate answers w.r.t the vanilla zero-shot approach.

\section{Resources} \label{sec:resources}
%For our analysis, we require a reference ontology for class disjointness classification. As shown by \citet{wang2006gauging}, only a small number of ontologies ($\approx 7 \%$) generally assert disjointness axioms. 
%Similarly, we analyze the ontologies within the Linked Open Vocabulary (LOV) \cite{vandenbussche2017lov} and find that only $\approx X\%$ of the ontologies uploaded assert any disjointness axiom. This further proves the difficulty of the task. Ontology designers are likely to overlook disjointness axioms. Several reasons might be accountable for this behavior, from the inherent difficulty of classifying two classes as ontologically disjoint to the specificity of the domain.

To effectively assess the ability of LLMs to support the assertion of disjointness axioms, we ideally require a reference ontology that includes a sized set of classes, to ensure diversity during the experiments and some disjoint classes in its description, preferably specified through a specific disjoint class property such as \texttt{owl:disjointWith}.
These criteria maximize the generalizability of the approach and encourage its use for future studies. 

Several ontologies can be identified for this task, from foundational ontologies, such as DOLCE\footnote{\url{https://github.com/appliedontolab/DOLCE/blob/main/OWL/DOLCEbasic.owl} \\ \url{http://www.ontologydesignpatterns.org/ont/dul/DUL.owl}} or UFO\footnote{\url{https://nemo-ufes.github.io/gufo/}}, to domain-specific ontologies, such as FoodOn\footnote{\url{https://foodon.org/}}. Disjointness axioms from these ontologies, however, are not intuitive and require extensive common-sense reasoning and domain knowledge. For instance, DOLCE defines an \texttt{Event} to be disjoint from an \texttt{Object} while UFO does not. Both axioms are correct, as they deeply depend on their philosophical commitment to these abstract concepts. Similarly, the FoodOn ontology asserts that the \textit{Arabia coffee plant} \footnote{\url{https://en.wikipedia.org/wiki/Coffea_arabica}}, the plant used to produce black coffee, is disjoint with \textit{Camellia sinensis} \footnote{\url{https://en.wikipedia.org/wiki/Camellia_sinensis}}, the plant used to produce black tea. In this case, deciding whether the two plants should be considered disjoint highly depends on the domain of the ontology.
To avoid feeding the LLM with classes whose disjointness highly depends on the context or domain, we choose to avoid foundational and domain-specific ontologies for our initial experiments. Moreover, as our interaction with the LLM is based on natural language, we only consider ontologies that provide natural language labels for classes via labeling properties, such as \texttt{skos:prefLabel} or \texttt{rdfs:label}.

We ultimately decided to use the DBpedia ontology\footnote{\url{https://DBpedia.org/ontology/}, often referred to with the \texttt{dbo:} namespace, which we omit hereafter} because of its general popularity and conformity with dataset minimal requirements. Since the DBpedia ontology is created through a crowdsourcing approach \cite{lehmann2015DBpedia}, the availability of disjointness axioms cannot be expected to be equally accurate across all classes, as it depends on the annotators' expertise and diligence. This issue has been actively discussed within the DBpedia community\footnote{\url{https://github.com/DBpedia/ontology-tracker/issues/2}}. The main drawback is the lack of a systematic approach in the creation of the taxonomy, which greatly impacts the consistency of the ontology when disjointness axioms are asserted. 
\begin{table}[ht]
    \centering
    \begin{tabular}{ll}
    \toprule
    Class A & Class B \\ \midrule
    \url{http://DBpedia.org/ontology/Person} & \url{http://DBpedia.org/ontology/ProtohistoricalPeriod} \\
    \url{http://DBpedia.org/ontology/Person} & \url{http://DBpedia.org/ontology/UnitOfWork} \\
    \url{http://DBpedia.org/ontology/Agent} & \url{http://DBpedia.org/ontology/Place} \\
    \url{http://DBpedia.org/ontology/Fish} & \url{http://DBpedia.org/ontology/Mammal} \\
    \url{http://DBpedia.org/ontology/Event} & \url{http://DBpedia.org/ontology/Person} \\
    \bottomrule
    \end{tabular}
    \caption{Examples of pairs of classes explicitly specified as ontologically disjoint in the DBpedia ontology.}
    \label{tab:DBpedia-example}
\end{table}
In particular, we found $23$ explicit disjointness axioms in the DBpedia ontology. In Section \ref{sec:approach} we show how exploiting automated reasoning techniques allows the creation of a larger pool of disjoint classes. In Table \ref{tab:DBpedia-example} a selection of disjointness axioms within the ontology is shown. Indeed, most of the disjointness axioms are \textit{universally} known common-sense relations, such as disjointness between \texttt{dbo:Fish} and \texttt{dbo:Mammal} or \texttt{dbo:Agent} and \texttt{dbo:Place}.

\newcommand{\K}{\mathcal{K}}
\newcommand{\D}{\mathcal{D}}
\section{Proposed approach} \label{sec:approach}
We now describe our approach 
%to classify two named classes of a Knowledge Base as disjoint.
which, given a Knowledge Base, clarifies for \emph{every} pair of named classes of that ontology if disjointness should hold between the two classes or not. 
At the core of the approach is prompting an LLM to exploit the semantic and linguistic ``world knowledge'' it has obtained from training on vast amounts of textual data. 
The two major underlying objectives of our approach are:

\vspace{-0.47ex}

\begin{enumerate}
\item Ensuring that the resulting disjointness-enriched ontology is satisfiable (i.e., contradiction-free) for usability reasons since otherwise it would be unusable for any reasoning tasks, including ontology-supported querying.
\item Minimizing the number of interactions with the LLM for efficiency reasons and cost-awareness.
%(note also recent rise in awareness of the carbon footprint caused by excessive use of LLMs \cite{luccioni2023co2}).
\end{enumerate}

\vspace{-0.47ex}

\noindent We propose to address both objectives using automated reasoning. More specifically, we continuously materialize all the (non-)disjointness information that follows logically from the original knowledge base plus the already acquired disjointness information. Thus, the LLM is only queried about the disjointness status of pairs of classes, when neither of the outcomes would result in an inconsistency. In this way, the derived information remains contradiction-free ``by design'' and, at the same time, the number of queries to the LLM is significantly reduced.    
Our approach relies on several logical correspondences, discussed in the following.

%\vspace{-0.95ex}

\begin{proposition} \label{prop:common-subclasses}
Let $\K$ be a knowledge base and let  $\mathtt{C}_1$, $\mathtt{C}_2$, $\mathtt{D}_1$, $\mathtt{D}_2$ be classes of $\K$ such that the following statements follow from $\K$:
%\begin{itemize}
%\item 
(i) $\mathtt{C}_1$ and $\mathtt{C}_2$ are disjoint,
%\item 
(ii) $\mathtt{D}_1$ is a subclass of $\mathtt{C}_1$,
%\item 
(iii) $\mathtt{D}_2$ is a subclass of $\mathtt{C}_2$.
%\end{itemize}
Then $\K$ also entails that $\mathtt{D}_1$ and $\mathtt{D}_2$ are disjoint.
\end{proposition}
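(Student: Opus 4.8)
The plan is to argue purely semantically, unfolding the model-theoretic reading of entailment supplied by Definition \ref{def:taxo}. A statement ``follows from $\K$'' precisely when it holds across the set $\mathscr{I}$ of conceivable worlds, which in this setting we identify with the models of $\K$. It therefore suffices to fix an arbitrary interpretation $\mathcal{I} = (\Delta, \cdot^\mathcal{I}) \in \mathscr{I}$ and show that $\mathtt{D}_1^\mathcal{I} \cap \mathtt{D}_2^\mathcal{I} = \emptyset$ in it; since $\mathcal{I}$ is arbitrary, the disjointness of $\mathtt{D}_1$ and $\mathtt{D}_2$ then follows from $\K$ by the definition of disjointness.

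First I would translate hypotheses (i)--(iii) into their set-theoretic content under this fixed $\mathcal{I}$. By (i) together with the definition of disjointness, $\mathtt{C}_1^\mathcal{I} \cap \mathtt{C}_2^\mathcal{I} = \emptyset$. By (ii) and (iii) together with the definition of the subclass relation, $\mathtt{D}_1^\mathcal{I} \subseteq \mathtt{C}_1^\mathcal{I}$ and $\mathtt{D}_2^\mathcal{I} \subseteq \mathtt{C}_2^\mathcal{I}$. The core step is then a one-line monotonicity-of-intersection computation: from the two inclusions we obtain $\mathtt{D}_1^\mathcal{I} \cap \mathtt{D}_2^\mathcal{I} \subseteq \mathtt{C}_1^\mathcal{I} \cap \mathtt{C}_2^\mathcal{I}$, and the right-hand side is empty, forcing $\mathtt{D}_1^\mathcal{I} \cap \mathtt{D}_2^\mathcal{I} = \emptyset$. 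Since this equality holds in every $\mathcal{I} \in \mathscr{I}$, we conclude that $\K$ entails the disjointness of $\mathtt{D}_1$ and $\mathtt{D}_2$.

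I do not anticipate a genuine obstacle: the argument is a routine chase of subset inclusions within a single interpretation, carried out uniformly across all conceivable worlds. The only point requiring care is the conceptual bridge between ``entailment from $\K$'' and the quantification over conceivable worlds in Definition \ref{def:taxo}. Once $\mathscr{I}$ is understood as the collection of models of $\K$, the three hypotheses and the desired conclusion all take the universally-quantified form ``holds in every $\mathcal{I} \in \mathscr{I}$,'' so the entire proposition reduces to the pointwise set-theoretic fact established above, and no reasoning about the syntactic structure of $\K$ is needed.
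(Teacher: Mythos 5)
Your proof is correct and follows essentially the same route as the paper's: fix an arbitrary model of $\K$, unfold the three hypotheses into set-theoretic statements via Definition~\ref{def:taxo}, and conclude $\mathtt{D}_1^\mathcal{I} \cap \mathtt{D}_2^\mathcal{I} = \emptyset$. The only (immaterial) difference is that you invoke monotonicity of intersection directly, whereas the paper rewrites the inclusions as equalities $\mathtt{D}_k^\mathcal{I} = \mathtt{C}_k^\mathcal{I} \cap \mathtt{D}_k^\mathcal{I}$ and reassociates.
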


\begin{proof}
Consider an arbitrary model $\mathcal{I}$ of $\K$. According to the assumptions and in view of Definition~\ref{def:taxo}, we know that (i)~$\mathtt{C}_1^\mathcal{I} \cap \mathtt{C}_2^\mathcal{I} = \emptyset$,  (ii)~$\mathtt{D}_1^\mathcal{I} \subseteq \mathtt{C}_1^\mathcal{I}$, and  (iii)~$\mathtt{D}_2^\mathcal{I} \subseteq \mathtt{C}_2^\mathcal{I}$. We equivalently express (ii) and (iii) by (ii')~$\mathtt{D}_1^\mathcal{I} = \mathtt{C}_1^\mathcal{I} \cap \mathtt{D}_1^\mathcal{I}$, and  (iii')~$\mathtt{D}_2^\mathcal{I} = \mathtt{C}_2^\mathcal{I} \cap \mathtt{D}_2^\mathcal{I}$. This allows us to infer $\mathtt{D}_1^\mathcal{I} \cap \mathtt{D}_2^\mathcal{I} = (\mathtt{C}_1^\mathcal{I} \cap \mathtt{D}_1^\mathcal{I}) \cap (\mathtt{C}_2^\mathcal{I} \cap \mathtt{D}_2^\mathcal{I}) = (\mathtt{C}_1^\mathcal{I} \cap \mathtt{C}_2^\mathcal{I}) \cap (\mathtt{D}_1^\mathcal{I} \cap \mathtt{D}_2^\mathcal{I}) = \emptyset \cap (\mathtt{D}_1^\mathcal{I} \cap \mathtt{D}_2^\mathcal{I}) = \emptyset$. 
\end{proof}

\noindent We exploit this property to use subclass relationships from $\K$ to deduce class disjointness statements from existing class disjointness statements. This way we avoid posing redundant disjointness queries to the underlying LLM.

\begin{proposition} \label{prop:sub-disjoint}
Let $\K$ be a knowledge base and let  $\mathtt{C}_1$, $\mathtt{C}_2$, $\mathtt{C}$ be classes of $\K$ such that the following statements follow from $\K$:
%\begin{itemize}
%\item 
(i) $\mathtt{C}_1$ and $\mathtt{C}_2$ are disjoint,
%\item 
(ii) $\mathtt{C}$ is a subclass of $\mathtt{C}_1$,
%\item 
(iii) $\mathtt{C}$ is a subclass of $\mathtt{C}_2$.
%\end{itemize}
Then $\K$ also entails that $\mathtt{C}$ is incoherent.
\end{proposition}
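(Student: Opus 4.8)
The plan is to mirror the model-theoretic argument used for Proposition~\ref{prop:common-subclasses}, unwinding the definitions from Definition~\ref{def:taxo} over an arbitrary model. First I would fix an arbitrary interpretation $\mathcal{I} \in \mathscr{I}$ that is a model of $\K$ and rewrite each of the three hypotheses in set-theoretic terms: (i) becomes $\mathtt{C}_1^\mathcal{I} \cap \mathtt{C}_2^\mathcal{I} = \emptyset$, while (ii) and (iii) become $\mathtt{C}^\mathcal{I} \subseteq \mathtt{C}_1^\mathcal{I}$ and $\mathtt{C}^\mathcal{I} \subseteq \mathtt{C}_2^\mathcal{I}$ respectively.

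The second step chains these containments: since $\mathtt{C}^\mathcal{I}$ is contained in both $\mathtt{C}_1^\mathcal{I}$ and $\mathtt{C}_2^\mathcal{I}$, it is contained in their intersection, i.e.\ $\mathtt{C}^\mathcal{I} \subseteq \mathtt{C}_1^\mathcal{I} \cap \mathtt{C}_2^\mathcal{I}$. Combining this with (i) forces $\mathtt{C}^\mathcal{I} \subseteq \emptyset$, hence $\mathtt{C}^\mathcal{I} = \emptyset$. Since $\mathcal{I}$ was arbitrary, every model of $\K$ satisfies $\mathtt{C}^\mathcal{I} = \emptyset$, which is exactly the condition for $\mathtt{C}$ to be incoherent.

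An alternative, even shorter route would be to derive the claim as an immediate corollary of Proposition~\ref{prop:common-subclasses} by instantiating $\mathtt{D}_1 = \mathtt{D}_2 = \mathtt{C}$: that proposition then yields that $\mathtt{C}$ is disjoint with itself, i.e.\ $\mathtt{C}^\mathcal{I} \cap \mathtt{C}^\mathcal{I} = \emptyset$ in every model, and the idempotence of intersection gives $\mathtt{C}^\mathcal{I} \cap \mathtt{C}^\mathcal{I} = \mathtt{C}^\mathcal{I}$, hence incoherence. There is no genuine obstacle in either approach; the only point requiring a moment's care is recognizing that ``disjoint with itself'' collapses to emptiness via $\mathtt{C}^\mathcal{I} \cap \mathtt{C}^\mathcal{I} = \mathtt{C}^\mathcal{I}$. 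The set-theoretic reasoning is otherwise elementary and parallels the preceding proof almost verbatim.
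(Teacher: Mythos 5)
Your main argument is correct and essentially identical to the paper's proof: both fix an arbitrary model, translate the three hypotheses into set-theoretic statements via Definition~\ref{def:taxo}, and conclude $\mathtt{C}^\mathcal{I} \subseteq \mathtt{C}_1^\mathcal{I} \cap \mathtt{C}_2^\mathcal{I} = \emptyset$ (the paper merely packages the subset facts as equalities and chains intersections rather than invoking ``contained in both implies contained in the intersection''). Your alternative derivation as a corollary of Proposition~\ref{prop:common-subclasses} with $\mathtt{D}_1 = \mathtt{D}_2 = \mathtt{C}$ is also sound and slightly slicker, though the paper does not take that route.
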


\begin{proof}
Consider an arbitrary model $\mathcal{I}$ of $\K$. According to the assumptions and in view of Definition~\ref{def:taxo}, we know that (i)~$\mathtt{C}_1^\mathcal{I} \cap \mathtt{C}_2^\mathcal{I} = \emptyset$,  (ii)~$\mathtt{C}^\mathcal{I} \subseteq \mathtt{C}_1^\mathcal{I}$, and  (iii)~$\mathtt{C}^\mathcal{I} \subseteq \mathtt{C}_2^\mathcal{I}$. We equivalently express (ii) and (iii) by (ii')~$\mathtt{C}^\mathcal{I} = \mathtt{C}_1^\mathcal{I} \cap \mathtt{C}^\mathcal{I}$, and  (iii')~$\mathtt{C}^\mathcal{I} = \mathtt{C}_2^\mathcal{I} \cap \mathtt{C}^\mathcal{I}$. This allows us to infer $\mathtt{C}^\mathcal{I} = \mathtt{C}^\mathcal{I} \cap \mathtt{C}^\mathcal{I} = (\mathtt{C}_1^\mathcal{I} \cap \mathtt{C}^\mathcal{I}) \cap (\mathtt{C}_2^\mathcal{I} \cap \mathtt{C}^\mathcal{I}) = (\mathtt{C}_1^\mathcal{I} \cap \mathtt{C}_2^\mathcal{I}) \cap \mathtt{C}^\mathcal{I} = \emptyset  \cap \mathtt{C}^\mathcal{I} = \emptyset$. 
\end{proof}

\noindent We exploited this property indirectly under the assumption that any named class $C$ in the considered ontology is supposed to have instances -- which seems to be a reasonable assumption since, otherwise, the definition of the class appears to be meaningless. In that case, any two classes that have a common subclass must be not disjoint. 

\begin{proposition} \label{prop:material-disj}
Let $K$ be a knowledge base, let  $\mathtt{C}_1$, $\mathtt{C}_2$ be classes and let $e$ be an individual of $\K$ that such that the following statements follow from $\K$:
%\begin{itemize}
%\item 
(i) $\mathtt{C}_1$ and $\mathtt{C}_2$ are disjoint,
%\item 
(ii) $e$ is an instance of $\mathtt{C}_1$,
%\item 
(iii) $\mathtt{e}$ is an instance of $\mathtt{C}_2$.
%\end{itemize}
Then $\K$ is unsatisfiable.
\end{proposition}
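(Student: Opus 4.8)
The plan is to mimic the structure of the two preceding propositions, working with an arbitrary model and translating the hypotheses into set-theoretic facts via Definition~\ref{def:taxo}. First I would consider an arbitrary model $\mathcal{I}$ of $\K$. By the assumptions and Definition~\ref{def:taxo}, this gives us (i)~$\mathtt{C}_1^\mathcal{I} \cap \mathtt{C}_2^\mathcal{I} = \emptyset$, (ii)~$e^\mathcal{I} \in \mathtt{C}_1^\mathcal{I}$, and (iii)~$e^\mathcal{I} \in \mathtt{C}_2^\mathcal{I}$.

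Next I would combine (ii) and (iii) to observe that $e^\mathcal{I} \in \mathtt{C}_1^\mathcal{I} \cap \mathtt{C}_2^\mathcal{I}$, which immediately contradicts (i), since it exhibits an element in a set asserted to be empty. Hence no such model $\mathcal{I}$ can exist.

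The conceptual subtlety — rather than a computational obstacle — is the final inferential step: from the fact that \emph{every} model of $\K$ would have to contain such a contradictory element, one concludes that $\K$ has no models at all, i.e., $\K$ is unsatisfiable. This differs slightly from the previous two propositions, which derived an \emph{entailment} (a statement true in every model); here the conclusion is the \emph{nonexistence} of models. I would therefore phrase the argument as a proof by contradiction: assuming $\K$ were satisfiable, it would admit some model $\mathcal{I}$, and the above reasoning shows this $\mathcal{I}$ cannot consistently satisfy all three hypotheses, yielding the desired contradiction. A minor notational point to watch is the inconsistency in the statement between $e$ and $\mathtt{e}$, which I would simply treat as the same individual name throughout.
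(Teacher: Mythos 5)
Your proposal is correct and follows essentially the same route as the paper's own proof: assume a model $\mathcal{I}$ of $\K$, translate the hypotheses into $\mathtt{C}_1^\mathcal{I} \cap \mathtt{C}_2^\mathcal{I} = \emptyset$ and $\mathtt{e}^\mathcal{I} \in \mathtt{C}_1^\mathcal{I}$, $\mathtt{e}^\mathcal{I} \in \mathtt{C}_2^\mathcal{I}$, derive $\mathtt{e}^\mathcal{I} \in \emptyset$, and conclude that no model exists. Your remarks on the contradiction-style conclusion (versus the entailment-style conclusions of the preceding propositions) and on the $e$ versus $\mathtt{e}$ notation are accurate but do not change the argument.
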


\begin{proof}
Suppose $\mathcal{I}$ is a model of $\K$. According to the assumptions and in view of Definition~\ref{def:taxo}, we know that (i)~$\mathtt{C}_1^\mathcal{I} \cap \mathtt{C}_2^\mathcal{I} = \emptyset$,  (ii)~$\mathtt{e}^\mathcal{I} \in \mathtt{C}_1^\mathcal{I}$, and  (iii)~$\mathtt{e}^\mathcal{I} \in \mathtt{C}_2^\mathcal{I}$. Then, combining (ii) and (iii) we obtain $\mathtt{e}^\mathcal{I} \in \mathtt{C}_1^\mathcal{I} \cap \mathtt{C}_2^\mathcal{I}$ and applying (i) yields $\mathtt{e}^\mathcal{I} \in \emptyset$ which is a contradictory statement. Thus $\K$ cannot have any models, which means it is unsatisfiable.
\end{proof}

\begin{algorithm}
    \caption{Determine the pair of disjoint classes derivable from $\K$}
    \label{algo:disjoint-pairs}
    \hspace*{\algorithmicindent} \textbf{Input} A knowledge base $\K$ containing a class hierarchy. The knowledge base might or might not contain defined disjointness axioms and/or more complex axiomatizations. \\
    \hspace*{\algorithmicindent} \textbf{Output} A list $\mathcal{L}$ of pair of classes such that disjointness statements logically following from $\K$ are explicitly asserted.
    \begin{algorithmic}[1]
    \State Create a list $\mathcal{L}$ of all pairs of classes $(\mathtt{C}_1, \mathtt{C}_2) \mid \mathtt{C}_1$ is lexicographically smaller than $\mathtt{C}_2$. 
    
    %\Comment{Exploit the symmetry of disjointness}
    \State Label all entries in $\mathcal{L}$ as ``unknown''.
    
    \ForAll{$\mathtt{D}_1$ \textbf{disjoint with} $\mathtt{D}_2$ \textbf{in} $\K$}
        \ForAll{$(\mathtt{C}_1, \mathtt{C}_2) \in \mathcal{L}$}
            \If{($\mathtt{C}_1 \sqsubseteq \mathtt{D}_1 \land \mathtt{C}_2 \sqsubseteq \mathtt{D}_2$) \textbf{or} ($\mathtt{C}_1 \sqsubseteq \mathtt{D}_2 \land \mathtt{C}_2 \sqsubseteq \mathtt{D}_1$)}
                \State $(\mathtt{C}_1, \mathtt{C}_2) \gets$ ``disjoint''
            \EndIf
        \EndFor
    \EndFor
    
    \ForAll{$(\mathtt{C}_1, \mathtt{C}_2) \in \mathcal{L} \mid (\mathtt{C}_1, \mathtt{C}_2 ) =\ $``unknown''}
        \If{$\mathtt{C}_1$ \textbf{and} $\mathtt{C}_2$ \textbf{have joint subclasses}}
            \State $(\mathtt{C}_1, \mathtt{C}_2) \gets\ $ ``not disjoint''
            %\Comment{Includes the case where $\mathtt{C}_2$ is a subclass of $\mathtt{C}_1$}
        \EndIf
    \EndFor
    
    \ForAll{$(\mathtt{C}_1, \mathtt{C}_2) \in \mathcal{L} \mid (\mathtt{C}_1, \mathtt{C}_2) =\ $``unknown''}
        \State Query $K$ for joint instances of $\mathtt{C}_1$ and $\mathtt{C}_2$.
        \If{$\exists \mathtt{e} \in \mathbf{I} \mid \mathtt{e} : \mathtt{C}_1 \land \mathtt{e} : \mathtt{C}_2 $}
            \State $(\mathtt{C}_1, \mathtt{C}_2) \gets\ $ ``not disjoint''
        \EndIf
    \EndFor
    
    \end{algorithmic}
\end{algorithm}
\begin{algorithm}
    \caption{Determine the set of disjointness statements $\D$ consistent with $\K$}
    \label{algo:query-llm}
    \hspace*{\algorithmicindent} \textbf{Input} A list $\mathcal{L}$ containing pairs of classes labeled as ``unknown'', ``disjoint'' or ``not disjoint'';\\ a prompt $P$ for disjointness classification, with 
    $LLM_P: \mathbf{C} \times \mathbf{C} \rightarrow \{ \text{``disjoint''},\! \text{``not disjoint''}\}$ the function that queries an LLM for disjointness of two classes using prompt $P$\\
    \hspace*{\algorithmicindent} \textbf{Output} A set $\D$ of class disjointness axioms, such that all valid disjointness statements logically follow from $\K \cup \D$ and no invalid disjointness statements follow from it.
    
    \begin{algorithmic}[1]    
    \While{$\exists (\mathtt{D}_1, \mathtt{D}_2) \in \mathcal{L} \mid (\mathtt{D}_1, \mathtt{D}_2) =\ $ ``unknown''}
        \State Select $(\mathtt{D}_1, \mathtt{D}_2) \in \mathcal{L} \mid (\mathtt{D}_1, \mathtt{D}_2) =\ $ ``unknown''
        \State $d \gets LLM_P(\mathtt{D}_1, \mathtt{D}_2)$        
        \If{$d =$ ``disjoint''} 
            \ForAll{$(\mathtt{C}_1, \mathtt{C}_2) \in \mathcal{L} \mid (\mathtt{C}_1 \sqsubseteq \mathtt{D}_1 \land \mathtt{C}_2 \sqsubseteq \mathtt{D}_2) \lor (\mathtt{C}_2 \sqsubseteq \mathtt{D}_1 \land \mathtt{C}_1 \sqsubseteq \mathtt{D}_2)$}
                \State $(\mathtt{C}_1, \mathtt{C}_2) \gets$ ``disjoint''
            \EndFor
        \Else
            \ForAll{$(\mathtt{C}_1, \mathtt{C}_2) \in \mathcal{L} \mid (\mathtt{D}_1 \sqsubseteq \mathtt{C}_1 \land \mathtt{D}_2 \sqsubseteq \mathtt{C}_2) \lor (\mathtt{D}_2 \sqsubseteq \mathtt{C}_1 \land \mathtt{D}_1 \sqsubseteq \mathtt{C}_2)$}
                \State $(\mathtt{C}_1, \mathtt{C}_2) \gets$ ``not disjoint''
            \EndFor
        \EndIf
    \EndWhile

    \State $\D^* \gets \{ \texttt{DisjointClasses}(\mathtt{C}_1, \mathtt{C}_2) \mid (\mathtt{C}_1, \mathtt{C}_2) \in \mathcal{L} \land (\mathtt{C}_1, \mathtt{C}_2) = \text{``disjoint''}\}$
    \State Determine the minimal subset $\D$ of $\D^*$ such that $\K \cup \D$ entails $\D^*$
    \end{algorithmic}
\end{algorithm}

\noindent Again, this property can be exploited by noting that any two classes having common instances must not be disjoint. These considerations lead to the proposed methodology, detailed in Algorithm \ref{algo:disjoint-pairs}, which achieves the above-mentioned objective of producing an enriched knowledge base that is guaranteed to be contradiction-free, provided that the original knowledge base is.

Algorithm \ref{algo:query-llm} achieves the objective of reducing the number of interactions with the LLM and maintaining satisfiability as new disjointness information is added through the LLM. The aim of producing an output that accurately reflects taxonomic relationships crucially depends on the quality and accuracy of the LLM's responses. This, in turn, is influenced by both the LLM itself and the chosen prompting strategy. We focus on these issues in Section~\ref{sec:experiments}.

The last steps of Algorithm \ref{algo:query-llm} (lines 14 and 15) are optional, but highly recommended, as they remove logically redundant statements from the disjointness-enriched knowledge base $\K \cup \D$. This yields a knowledge base that is logically equivalent but typically much smaller in size and hence both easier to process algorithmically and to scrutinize and maintain manually.
Also, this ``pruning step'' is not computationally expensive, as it only requires $|\D^*|$ calls to a reasoner.

\section{Experiments} \label{sec:experiments}
In this section, we experiment with the approach proposed in Section \ref{sec:approach} on the classes extracted from the DBpedia ontology. In particular, by relying on Algorithm \ref{algo:disjoint-pairs}, we obtain the list $\mathcal{L}$ related to the DBpedia ontology. We have that $|\mathcal{L}| = 1148$, with $370$ pairs labeled as disjoint and $778$ pairs labeled as not \textit{unknown}. 
In Table \ref{tab:derived-disjoint}, we provide some examples of classes in $\mathcal{L}$.

\begin{table}[ht]
    \caption{Examples of disjointness between classes derived from Propositions \ref{prop:common-subclasses}, \ref{prop:sub-disjoint}, and \ref{prop:material-disj}.}
    \label{tab:derived-disjoint}
    \centering
    \begin{tabular}{llcc}
    \toprule
    $A$ & $B$ & Disjoint & Reason \\
    \midrule
    dbo:Chancellor & dbo:Species & $\times$ & Proposition \ref{prop:common-subclasses} \\
    dbo:LatterDaySaint & dbo:Religious & $\times$ & Proposition \ref{prop:common-subclasses} \\
    dbo:Person & dbo:Race & $\checkmark$ & Proposition \ref{prop:sub-disjoint} \\
    dbo:AcademicConference & dbo:Person & $\checkmark$ & Proposition \ref{prop:sub-disjoint} \\
    \bottomrule
    \end{tabular}
\end{table} 

Note that the list $\mathcal{L}$ assumes that the ontology designers carefully produced a taxonomy that is intended to also reflect disjointness between classes.
As shown in Section \ref{sec:resources}, however, this is not the case. The design of the taxonomy of DBpedia is structured such that disjoint axioms might result in unwanted inconsistencies. For this reason, we employ multiple metrics to evaluate the LLMs' performances, each measuring a different behavior of the model. For all metrics, a higher score indicates better performances, with $1$ being the maximum score.
In particular, disjoint recall (DR) measures how much the LLM aligns with humans by measuring the amount of \textit{true} disjointness axioms that have been identified by the LLM. This measure provides an evaluation of the reliability of the prompt. Non-disjoint F1 (NDF1) measures the F1 score between the non-disjoint couples in $L$ and the ones identified by the LLM. This provides a measure of how conservative the LLM is on its answers -- i.e. how much the LLM acknowledges the open-world assumption. The F1 metrics measure the end-to-end performances of the model. The symmetric consistency metric (SC) measures how much the answers provided by the LLM respect the symmetric property of the disjointness axiom -- i.e. if $A$ is disjoint from $B$ then $B$ is disjoint from $A$. Finally, we measure the overall accuracy of each model.

% \begin{table}[ht]
%     \centering
%     \caption{Evaluation metrics and the behavior they assess.}
%     \label{tab:evaluation-metrics}
%     \begin{tabular}{ccc}
%         \toprule
%         \textbf{Name} & \makecell{\textbf{Disjoint axiom set} \\ (asserted and derived)} & \textbf{Measured behavior} \\ \midrule
%         Disjoint recall (DR) & Positive disjointedness & Aligns with humans \\
%         Non-disjoint F1 (NDF1) & Negative disjointedness & Conservativity \\
%         F1 & All & End-to-end performances \\
%         Symmetric consistency (SC) & Positive and negative & Symmetry of disjointness is respected \\
%         \bottomrule
%     \end{tabular}
% \end{table}

\paragraph{Prompting}
\begin{table}[ht]
    \caption{Prompting strategy templates. $A$ and $B$ are natural language labels of classes from the ontology.}
    \centering
    \begin{tabularx}{\textwidth}{l@{\hspace{0.5cm}}X}
        \toprule
        \textbf{Prompting Strategy} & \textbf{Template} \\
        \midrule
        Naive & Answer only ``yes'' or ``no''.\\
        \midrule
        Zero-shot Task Description & This is a question about ontological disjointness, answer only with ``yes'' or ``no'' \\
        \midrule
        Few-shot Task Description & This is a question about ontological disjointness, answer only with ``yes'' or ``no''. \newline Examples of disjoint are: ``person'' and ``file system'', ``tower'' and ``person'', ``place'' and ``agent'', ``continent'' and ``sea'', ``baseball league'' and ``bowling league'', ``planet'' and ``star''. \newline Examples of not disjoint are: ``basketball player'' and ``baseball player'', ``means of transportation'' and ``reptile'', ``garden'' and ``historic place'', ``president'' and ``beauty queen'', ``castle'' and ``prison''. \\
        \midrule
        \textbf{QA Strategy} & \textbf{Template} \\
        \midrule
        Positive & Is the class $A$ disjoint from $B$? \\ \midrule
        Negative & Can a $A$ be a $B$? \\ 
        \bottomrule
    \end{tabularx}
    \label{tab:prompt}
\end{table}

We adopt different prompting strategies: a naive approach, where the LLM has to autonomously understand the task, a task description approach, where the disjointedness task is described and a few-shot approach that extends the task description by also providing some positive and negative examples. For each prompt, we frame the problem as a question-answering (QA) task, where the LLM has to answer positively or negatively to classify two classes as disjoint. To identify the best QA approach, we identify two prompts: (i) the LLM has to answer positively to classify two classes as disjoint and (ii) the LLM has to answer negatively. Table \ref{tab:prompt} describes the prompt templates we used.
When possible, we rely on the instruction format of each LLM and use the \textit{Prompting Strategy} template to instruct the LLM while we use the \textit{QA Strategy} as a query to the instructed LLM.

% \begin{itemize}
%     \item \textbf{Task Description} This involves describing the disjointness detection task by providing the role the LLM is playing and the goal of the classification task.
%     \item \textbf{Zero-shot Prompting} This strategy directly asks the LLM to answer a question without any input-output examples, relying entirely on the LLMs' pre-existing knowledge.
%     \item \textbf{Multi-shot Prompting} This strategy utilizes three correct input-output examples that show the LLM the task and successful input-output pairs. We offered both positive and negative examples as shown below.

% \bigskip

% \begin{tabular}{@{}r@{ }l@{}}
% positive &  examples\\\hline
% (\texttt{Place},& \texttt{Person})\\ 
% (\texttt{Continent},& \texttt{Sea})\\ 
% (\texttt{Gene},& \texttt{Movie})\\
% (\texttt{Planet},& \texttt{Star})\\ 
% (\texttt{BaseballLeague},& \texttt{BowlingLeague})\\ 
% \end{tabular}\quad 
% \begin{tabular}{@{}r@{ }l@{}}
% negative &  examples\\\hline 
% (\texttt{BasketballPlayer},& \texttt{BaseballPlayer})\\
% (\texttt{Garden},& \texttt{HistoricPlace})\\
% (\texttt{President},& \texttt{BeautyQueen})\\ 
% (\texttt{MeanOfTransportation},& \texttt{Reptile})\\ 
% (\texttt{Castle},& \texttt{Prison})\\
% \end{tabular}
% \normalsize
% \end{itemize}
%\subsection{Experimental setting}\label{expsetting}

\subsection{Experimental setup}
We perform our experiments on publicly available LLMs, to ensure full reproducibility of the experiments. For each LLM, we set the sampling temperature to $0$, to reduce the randomness of the result. Moreover, we only rely on \textit{small} LLMs -- i.e. LLMs with approximately $8$ billion of parameters. Through the use of proper optimization techniques, it is possible to run these models on consumer-level devices without the need for specialized hardware. We perform our experiments on a selection of the current state-of-the-art models, including Mistral 0.3 7B \cite{jiang2023mistral}, Gemma 2 9B~\cite{mesnard2024gemma2}, LLama 3 8B\footnote{\url{https://llama.meta.com/}}, and Qwen 2 7B \cite{yang2024qwen2}\footnote{Due to their closed-source nature and high costs, we reserve the exploration of GPT-3.5 and GPT-4 for future work.}. All experiments are run on 8-bit quantized models on an RTX3090 with 24GB of RAM. We experiment with each combination of the prompts of Table \ref{tab:prompt}.

% For evaluation, the "gold standard" was another list L', which for any two classes C1, C2 contains the verdict "disjoint", if the disjointness of the two classes can be logically inferred from the ontology, and otherwise "not disjoint".

% In total, we employed three prompt archetypes: zero-shot naive prompts without prompt engineering strategies, zero-shot task description prompts, and multi-shot task description prompts. Each of these prompt archetypes was also tested with two types of prompt phrasing. One where a "positive answer" is the correct classification (e.g. Is the class A disjoint from B?), and one where a "negative answer" is the correct classification (e.g. Can an A be a B?). In total, there were 6 prompts executed for each C1, C2 class relationship tested. The templates for these prompts can be found in Table~\ref{tab:prompt}.

\subsection{Results} \label{sec:results}
\begin{table}[t]
    \caption{Performance on disjointness detection for LLMs and prompt strategies. The best results for each prompt are \underline{underlined}, while the best results overall in a metric are in \textbf{bold}.}
    \centering
    \label{tab:benchmarking}
    \begin{tabular}{lllccccc}
    \toprule
    \textbf{Prompt} & \textbf{QA} & \textbf{LLM} & \textbf{DR} & \textbf{NDF1} & \textbf{F1} & \textbf{SC} & \textbf{Accuracy} \\
    \midrule
    \multirow{8}{*}{Naive} & \multirow{4}{*}{Positive} & Gemma 2 & 0.99 & 0.26 & \underline{0.53} & 0.89 & 0.42 \\
     &  & LLama 3 & 0.19 & 0.63 & 0.17 & 0.65 & 0.37 \\
     &  & Mistral 0.3 & \underline{\textbf{1.00}} & 0.03 & 0.49 & \underline{\textbf{0.98}} & 0.33 \\
     &  & Qwen 2 & 0.00 & \underline{\textbf{0.99}} & 0.01 & 0.96 & \underline{0.66} \\ \cmidrule{2-8}
    
     & \multirow{4}{*}{Negative} & Gemma 2 & 0.71 & \underline{0.91} & 0.69 & 0.85 & 0.79 \\ 
     &  & LLama 3 & 0.90 & 0.86 & \underline{0.74} & \underline{0.89}  & \underline{0.80} \\
     &  & Mistral 0.3 & 0.85 & 0.81 & 0.68 & 0.81 & 0.74 \\
     &  & Qwen 2 & \underline{0.92} & 0.80 & 0.70 & 0.84 & 0.74 \\ \midrule
    
    \multirow{8}{*}{Task description} & \multirow{4}{*}{Positive} & Gemma 2 & 0.99 & 0.35 & \underline{0.55} & 0.86 & 0.47 \\
     &  & LLama 3 & 0.86 & 0.08 & 0.45 & 0.90 & 0.31 \\
     &  & Mistral 0.3 & \underline{\textbf{1.00}} & 0.20 & 0.52 & \underline{0.91} & 0.40 \\
     &  & Qwen 2 & 0.04 & \underline{0.82} & 0.05 & 0.68 & \underline{0.49} \\ \cmidrule{2-8}
    
     & \multirow{4}{*}{Negative} & Gemma 2 & \underline{0.97} & \underline{0.83} & \underline{\textbf{0.75}} & 0.84 & \underline{0.79} \\
     &  & LLama 3 & 0.98 & 0.78 & 0.71 & \underline{0.90} & 0.75 \\
     &  & Mistral 0.3 & 0.85 & 0.76 & 0.64 & 0.76 & 0.69 \\
     &  & Qwen 2 & 0.96 & 0.61 & 0.61 & 0.72 & 0.60 \\ \midrule
    
    \multirow{8}{*}{Few shot} & \multirow{4}{*}{Positive} & Gemma 2 & 0.90 & 0.49 & \underline{0.54} & 0.83 & 0.51 \\
     &  & LLama 3 & 0.76 & 0.30 & 0.44 & 0.72 & 0.36 \\
     &  & Mistral 0.3 & \underline{0.95} & 0.20 & 0.50 & \underline{0.85} & 0.38 \\
     &  & Qwen 2 & 0.05 & \underline{0.87} & 0.07 & 0.74 & \underline{0.54} \\ \cmidrule{2-8}
    
     & \multirow{4}{*}{Negative} & Gemma 2 & 0.85 & \underline{0.89} & \underline{\textbf{0.75}} & \underline{0.81} & \underline{\textbf{0.82}} \\
     &  & LLama 3 & \underline{0.99} & 0.54 & 0.59 & 0.77 & 0.57 \\
     &  & Mistral 0.3 & 0.74 & 0.86 & 0.65 & 0.79 & 0.75 \\
     &  & Qwen 2 & 0.98 & 0.37 & 0.54 & 0.71 & 0.47 \\ \midrule
    
    \multicolumn{2}{c}{\multirow{4}{*}{Average among all prompts}} & Gemma 2 & \textbf{0.90} & 0.62 & \textbf{0.63} & \textbf{0.85} & \textbf{0.75} \\
     &  & LLama 3 & 0.78 & 0.53 & 0.52 & 0.81 & 0.47 \\
     &  & Mistral 0.3 & \textbf{0.90} & 0.48 & 0.58 & \textbf{0.85} & 0.49 \\
     &  & Qwen 2 & 0.49 & \textbf{0.74} & 0.33 & 0.78 & 0.36 \\ 
    \bottomrule
    \end{tabular}
\end{table}

The overall results are shown in Table \ref{tab:benchmarking}. In general, LLMs achieve promising results in disjointness detection. Notably, the best prompting technique is not providing few-shot examples, but rather providing the LLM with little to no description of the task. Indeed, it has been observed how few-shot prompting is more effective when in-context learning is required, while zero-shot prompting is more effective when the implicit knowledge of the LLM should be exploited \cite{reynolds2021fewshot}. 
Nonetheless, further research on few-shot prompting for disjointness classification should be performed, as lower performances can also be attributed to the amount and nature of the examples we provide in the prompt. We manually select examples that are likely to provide meaningful disjointness instances. However, a more complex approach could be employed, such as exploiting Retrieval Augmented Generation (RAG) techniques to provide examples that are more likely to be relevant for the classes used as input. Different heuristics can be used to measure the relevance of other classes, such as word embeddings or knowledge graph embeddings. 
Interestingly, framing the problem as a negative QA task -- i.e. asking whether an individual of a class can also be an instance of another class -- consistently outperforms the positive QA prompt. This could be attributed to the fact that using the negative approach is more consistent with natural language questions. LLMs can actively exploit their pre-training phase, which generally includes a fine-tuning phase to solve QA tasks akin to our negative prompt. On average, Gemma~2 performs better than the other LLMs. However, depending on the requirements, other LLMs might be better suited. For instance, Mistral 0.3 is better aligned with human judgment, since it has a higher recall on disjointness axioms.

\subsection{Disjointness on DBpedia}
Given the results of Table \ref{tab:benchmarking}, we consider Gemma 2 with task description prompt and a negative QA strategy as the most effective way of producing disjointness axioms among the methods tested. We execute Algorithm \ref{algo:query-llm} on the whole DBpedia ontology. We rely on a straightforward random selection for the pair $(\mathtt{D}_1, \mathtt{D}_2)$ (line 3).
In total, the algorithm takes $21589.75s \approx 6h$ to execute. Note that given the random selection, we are not able to exploit parallelism and query the LLM with single prompts. However, a selection strategy that enables parallel selection would greatly enhance the performance of the algorithm.
In total, we find $510,600$ disjointness axioms, which results in $\approx 98\%$ of the classes participating in at least one disjointness axiom. The number of axioms can be greatly reduced by relying on the ``pruning'' operation of Algorithm \ref{algo:query-llm} (line 15). In the case of the DBpedia ontology, the number of resulting axioms is $170,122$ -- a reduction of $\approx 66\%$.  

\begin{table}[ht]
    \caption{Example of disjointness judgments retrieved by Algorithm \ref{algo:query-llm}}
    \label{tab:DBpedia-disjoint}
    \centering
    \begin{tabular}{llc}
    \toprule
    Class $A$ & Class $B$ & Disjoint \\
    \midrule
    dbo:GeneLocation & dbo:HumanGene & $\times$ \\
    dbo:VideogamesLeague & dbo:Website & $\times$ \\
    dbo:InformationAppliance & dbo:MobilePhone & $\times$ \\
    dbo:Engineer & dbo:Embryology & $\times$ \\
    dbo:Identifier & dbo:District & $\times$ \\
    dbo:Mosque & dbo:Museum & $\checkmark$ \\
    dbo:MeanOfTransportation & dbo:Swimmer & $\checkmark$ \\
    dbo:WikimediaTemplate & dbo:WomensTennisAssociationTournament & $\checkmark$ \\
    dbo:Racecourse & dbo:Area & $\checkmark$ \\
    dbo:PlayboyPlaymate & dbo:Camera & $\checkmark$ \\
    \bottomrule
    \end{tabular}
\end{table}

For illustration and discussion purposes, Table \ref{tab:DBpedia-disjoint} shows a non-representative selection of particularly discussion-worthy positive and negative disjointness statements retrieved via Algorithm \ref{algo:query-llm}. We observe that for some class relationships, including both common-sense and domain-specific classes, our approach resulted in the ``conservative'' misclassification of classes as non-disjoint, meaning the LLM classified the classes as non-disjoint despite the classes actually being disjoint. Examples include dbo:VideogamesLeague and dbo:Website, dbo:GeneLocation and dbo:HumanGene, and dbo:Identifier and dbo:District.  
Conversely, we also observed ''aggressive''  misclassification where our approach classified classes as disjoint despite them being really non-disjoint. Straightforward examples include dbo:PlayboyPlaymate and dbo:Camera or dbo:WikimediaTemplate and dbo:WomensTennisAssociationTournament, with a more complicated example being dbo:Mosque and dbo:Museum. The latter being disproven by a counter-example, the famous Mosque Hagia Sophia in Turkey\footnote{\url{https://muze.gen.tr/muze-detay/ayasofya}}. 
To address these misclassification instances, we suspect that providing more contextual information in the prompt may improve classification accuracy, especially for domain-specific scenarios. Also, future work could be done to assess how, through prompt design, the approach could encourage more ``aggressive'' or ``conservative'' disjointness classifications in scenarios where relationships are more uncertain. 

\section{Conclusion and Future Work}
\label{sec:conclusions}

This work shows that LLMs can roughly identify and assert disjointness axioms in ontologies, with a different degree of reliability depending on the model. By harnessing their inherent background knowledge and employing strategic prompt engineering, we showed that these models can classify ontological disjointness with minimal human intervention. This capability simplifies ontology management and supports more robust reasoning in knowledge graphs. Our findings underscore the potential of LLMs as valuable tools for the automated enrichment of ontologies, which encourages future exploration and innovation in this domain.

Future works include testing the approach proposed in Section \ref{sec:approach} on other ontologies, to assess its effectiveness on different types of ontologies, including domain-specific ontologies. Additionally, comprehensive validation by human domain experts would be required to obtain conclusive insights into the degree of reliability of the axioms asserted by the LLM.

Moreover, using different LLMs with different numbers of parameters and improving and expanding our strategies for testing disjointness constitutes interesting future work. 

It could be worthwhile to look into heuristics for -- given a large list of disjointness candidate pairs -- picking those entries that are particularly ``promising''. One option would be to follow the strong disjointness assumption \cite{cornet2002usability} and pick ``sibling classes'', that is, classes $A$ and $B$ that have a common direct superclass $C$.
Furthermore, it could be interesting to test class pairs with just one or two examples of non-disjointness, as these instances may be errors to remove from the KG.
On another note, one could develop strategies for gauging the reliability of an LLM response by rephrasing the question asked. This involves adding a description of classes in prompts to see if it improves the answers, relying on proper ontology serialization techniques \cite{ringwald2024shades}. Finally, using advanced prompting techniques, such as chain-of-thought, may improve the results alongside RAG techniques to pick the few-shot examples. Similarly, a richer prompt, including more qualifying phrases such as ``at the same time'' to check the temporality of the disjointness or ``theoretically'' to force abstraction might instruct the model toward a more effective framing of the problem.

%%
%% The acknowledgments section is defined using the "acknowledgments" environment
%% (and NOT an unnumbered section). This ensures the proper
%% identification of the section in the article metadata, and the
%% consistent spelling of the heading.
\begin{acknowledgments}
Elias Crum acknowledges funding provided by VITO NV (\verb|UG_PhD_2303_contract|).
Antonio De Santis's doctoral scholarship is funded by the Italian Ministry of University and Research (MUR) under the National Recovery and Resilience Plan (NRRP), by Thales Alenia Space, and by the European Union (EU) under the NextGenerationEU project.
Alessia Pisu acknowledges MUR and EU-FSE for financial support of the PON Research and Innovation 2014-2020 (D.M. 1061/2021). Nicolas Lazzari has received funding from the FAIR – Future Artificial Intelligence Research Foundation as part of the grant agreement MUR n. 341. Sebastian Rudolph is funded by the Bundesministerium für Bildung und Forschung (BMBF, Federal Ministry of Education and Research) and DAAD (German Academic Exchange Service) in project 57616814 (SECAI, School of Embedded and Composite AI).
\end{acknowledgments}

%%
%% Define the bibliography file to be used
\clearpage
\bibliography{sample-ceur}

%%
%% If your work has an appendix, this is the place to put it.
\appendix

%\section{Prompt Templates}
%\label{app:prompt}

\end{document}